\title{Binary Classifier Calibration: Non-parametric approach}
\author{
Mahdi Pakdaman Naeini \\
Intelligent System Program\\
University of Piuttsburgh\\
\texttt{pakdaman@cs.pitt.edu} \\
\And
Gregory F. Cooper \\
Department of Biomedical Informatics \\
University of Pittsburgh \\
\texttt{gfc@pitt.edu} \\
\AND
Milos Hauskrecht \\
Computer Science Department \\
University of Pittsburgh \\
\texttt{milos@cs.pitt.edu} \\
}
\newcommand{\PP}{\mathbb{P}}
\begin{document}

\maketitle
 
\vspace{-0.08in}
\section*{Abstract}
Accurate calibration of probabilistic predictive models learned is
critical for many practical prediction and decision-making tasks. There are two
main categories of methods for building calibrated classifiers.
One approach is to develop methods for learning probabilistic models that are
well-calibrated, \textit{ab initio}. The other approach is to use some
post-processing methods for transforming the output of a classifier to be
well calibrated, as for example histogram binning, Platt scaling, and isotonic
regression. One advantage of the post-processing approach is that it can be applied
to any existing probabilistic classification model that was constructed using
any machine-learning method.

In this paper, we first introduce two measures for evaluating how well a
classifier is calibrated. We prove three theorems showing that using a simple
histogram binning post-processing method, it is possible to make a classifier
be well calibrated while retaining its discrimination capability. Also, by casting the
histogram binning method as a density-based non-parametric binary
classifier, we can extend it using two simple non-parametric density
estimation methods. We demonstrate the performance of the proposed calibration
methods on synthetic and real datasets. Experimental
results show that the proposed methods either outperform or are comparable to
existing calibration methods.

\vspace{-0.08in}
\section{Introduction}
\vspace{-0.07in}

The development of accurate probabilistic prediction models from data is
critical for many practical prediction and decision-making tasks. Unfortunately,
the majority of existing machine learning and data mining models and algorithms
are not optimized for this task and predictions they produce may be
miscalibrated.

Generally, a set of predictions of a binary outcome is well calibrated if the outcomes predicted to occur with
probability $p$ do occur about $p$ fraction of the time, for each probability
$p$ that is predicted. This concept can be readily generalized to outcomes with
more than two values. Figure \ref{IMG:Fig1} shows a hypothetical example of a reliability
curve \cite{degroot1983comparison,niculescu2005predicting}, which displays the
calibration performance of a prediction method. The curve shows, for example,
that when the method predicts $Z=1$ to have probability $0.5$, the outcome $Z=1$
occurs in about $0.57$ fraction of the instances (cases). The curve indicates
that the method is fairly well calibrated, but it tends to assign probabilities
that are too low. In general, perfect calibration corresponds to a straight line
from $(0, 0)$ to $(1, 1)$. The closer a calibration curve is to this line, the
better calibrated is the associated prediction method.

\begin{figure}[h] 
\begin{minipage}{16pc}
\centering 
  		\includegraphics[scale= 0.2]{./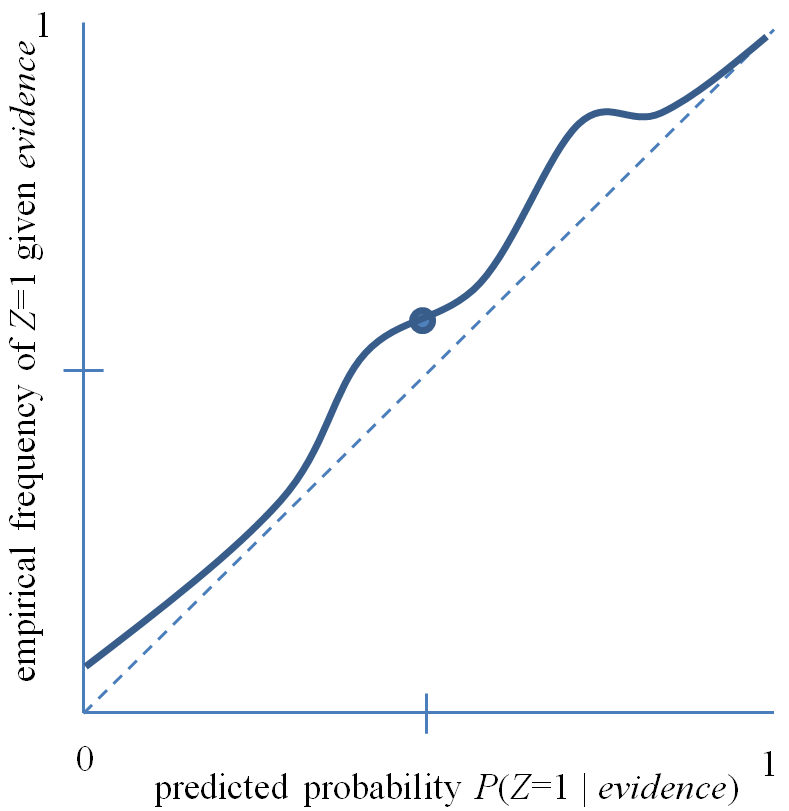}
  		\caption{The solid line shows a calibration (reliability) curve for predicting
		$Z=1$. The dotted line is the ideal calibration curve.}
        \label{IMG:Fig1}
\end{minipage}\hspace{2pc}%
\begin{minipage}{16pc}
   	\centering
  		\includegraphics[scale=0.4]{./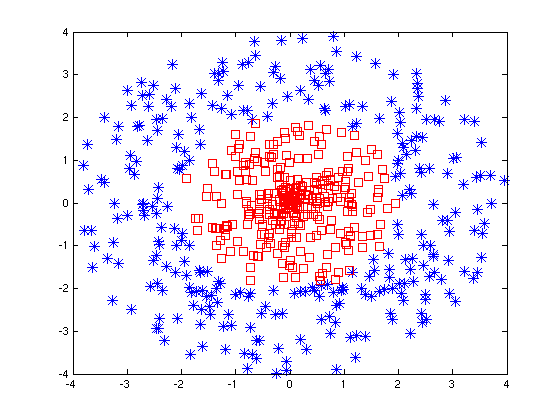}
  		\caption{Scatter plot of non-linear separable simulated data }
        \label{fig:SimulationData}

\end{minipage}
\end{figure}

If uncertainty is represented using probabilities, then optimal decision making
under uncertainty requires having models that are well calibrated. Producing
well calibrated probabilistic predictions is critical in many areas of science
(e.g., determining which experiments to perform), medicine (e.g., deciding which
therapy to give a patient), business (e.g., making investment decisions), and
many other areas. At the same time, calibration has not been studied nearly as
extensively as discrimination (e.g., ROC curve analysis) in machine learning and
other fields that research probabilistic modeling.

One approach to achieve a high level of calibration is to develop methods for
learning probabilistic models that are well-calibrated, \textit{ab initio}.
However, data mining and machine learning research has traditionally focused on
the development of methods and models for improving discrimination, rather than
on methods for improving calibration. As a result, existing methods have the
potential to produce models that are not well calibrated. The miscalibration
problem can be aggravated when models are learned from small-sample data or when
the models make additional simplifying assumptions (such as linearity or
independence).

Another approach is to apply post-processing methods (e.g., histogram
binning, Platt scaling, or isotonic regression) to the output of classifiers to
improve their calibration. The post-processing step can be seen as a function that maps output
of a prediction model to probabilities that are intended to be well calibrated.
Figure \ref{IMG:Fig1} shows an example of such a mapping. This approach frees
the designer of the machine learning model from the need to add additional
calibration measures and terms into the objective function used to learn a model.
The advantage of this approach is that it can be used with any existing
classification method, since calibration is performed
solely as the post-processing step. 

The objective of the current paper is to show that
the post-processing approach for calibrating binary classifiers is
theoretically justified. In particular, we show in the large sample limit that
post-processing will produce a perfectly calibrated classifier that has
discrimination perform (in terms of area under the ROC curve) that is at least
as good as the original classifier.
In the current paper we also introduce  two simple but effective methods
that can address the miscalibration problem. 

Existing post-processing calibration methods can be divided into parametric and non-parametric
methods. An example of a parametric method is Platt's method that applies a
sigmoidal transformation that maps the output of a predictive model
\cite{platt1999probabilistic} to a calibrated probability output. The parameters
of the sigmoidal transformation function are learned using a maximum
likelihood estimation framework. The key limitation of the approach is the
(sigmoidal) form of the transformation function, which only rarely fits the true
distribution of predictions.

The above problem can be alleviated using non-parametric methods. The most
common non-parametric methods are based either on binning
\cite{zadrozny2001obtaining} or isotonic regression \cite{ayer1955empirical}. In
the histogram binning approach, the raw predictions of a binary classifier are
sorted first, and then they are partitioned into $b$ subsets of equal
size, called bins. Given a prediction $y$, the method finds the bin
containing that prediction and returns as $\hat{y}$ the fraction of positive
outcomes $(Z=1)$ in the bin.

Zadronzy and Elkan \cite{zadrozny2002transforming} developed a calibration
method that is based on isotonic regression. This method only requires that the
mapping function be isotonic (monotonically increasing)
\cite{niculescu2005predicting}. The \textit{pair adjacent violators} (PAV)
algorithm is one instance of an isotonic regression algorithm
\cite{ayer1955empirical}. The isotonic calibration method based on the (PAV)
algorithm can be viewed as a binning algorithm where the position of boundaries
and the size of bins are seleted according to how well the classifier ranks the
examples in the training data \cite{zadrozny2002transforming}. Recently a
variation of the isotonic-regression-based calibration method was described for
predicting accurate probabilities with a ranking loss\cite{menon2012predicting}.

In this paper, section \ref{section:notations} introduces two measures,
\textit{maximum calibration error}(MCE) and \textit{expected calibration
error}(ECE), for evaluating how well a classifier is calibrated. In section
\ref{section:theorem} we prove three theorems to
show that by using a simple histogram-binning calibration method, it is possible
to improve the calibration capability of a classifier measured in terms of $MCE$ and
$ECE$ without sacrificing its discrimination capability measured in terms of
\textit{area under (ROC) curve} ($AUC$). Section \ref{section:methods} introduces
two simple extensions of the histogram binning method by casting the method as a simple
density based non-parametric binary classification problem. The results of 
experiments that evaluate various calibration methods are presented in section
\ref{section:experiments}. Finally, section \ref{section:conclusion} states
conclusions and describes several areas for the future work.

\vspace{-0.08in}
\section{Notations and Assumptions}
\vspace{-0.05in}

\label{section:notations}
This section present the notation and assumptions we use for formalizing the
problem of calibrating a binary classifier. We also define two measures for
assessing the calibration of such classifiers.

Assume a binary classifier is defined as a mapping $\phi: R^d \rightarrow
[0,1]$. As a result, for every input instance $x\in R^d$ the output of the
classifier is $y = \phi(x)$ where $y \in [0,1]$. For calibrating the classifier
$\phi(.)$ we assume there is a training set $\{(x_i,y_i,z_i)\}_{i=1}^N$ where $x_i \in R^d$ is the i'th
instance and $y_i = \phi(x_i) \in [0,1]$, and $z_i \in \{0,1\}$ is the true
class of i'th instance. Also we define $\hat{y}_i$ as the probability estimate
for instance $x_i$ achieved by using the histogram binning calibration method, which
is intended to yield a more calibrated estimate than does $y_i$. In addition we have
the following notation and assumptions that are used in the remainder of the paper:

\begin{itemize}
  \item $N$ is total number of instances
  \item $m$ is total number of positive instances
  \item $n$ is total number of negative instances
  \item $p_{in}$ is the space of uncalibrated probabilities $\{y_i\}$ which is
  defined by the classifier output
  \item $p_{out}$ is the space of transformed probability estimates $\{\hat{y}_i\}$ using histogram binning
  \item $B$ is the total number of bins defined on $p_{in}$ in the histogram
  binning model
  \item $B_i$ is the i'th bin defined on $p_{in}$
  \item $N_i$ is total number of instances $x_k$ for which the
  predicted value $y_k$ is located inside $B_i$
  \item $m_i$ is number of positive instances $x_k$ for which the
  predicted value $y_k$ is located inside  $B_i$
  \item $n_i$ is number of negative instances $x_k$ for which the
  predicted value $y_k$ is located inside $B_i$
  \item $\hat{\eta}_i = \frac{N_i}{N}$ is an empirical estimate of
  $\PP \{y \in B_i\}$
  \item $\eta_i$ is the value of $\PP \{y \in B_i\}$ as $N$ goes to infinity
  \item $\hat{\theta}_i = \frac{m_i}{N_i}$ is an empirical estimate of
  $\PP \{z=1| y \in B_i\}$
  \item $\theta_i$ is the value of $\hat{\theta}_i $ as $N$ goes to infinity
\end{itemize}
 
\subsection{Calibration Measures}
\label{section:measures}
In order to evaluate the calibration capability of a classifier, we use two
simple statistics that measure calibration relative to the ideal reliability diagram
\cite{degroot1983comparison,niculescu2005predicting}(Figure \ref{IMG:Fig1}
shows an example of a reliability diagram). These measures
are called Expected Calibration Error (ECE), and Maximum Calibration Error (MCE). In computing
these measures, the predictions are sorted and partitioned into ten bins. The
predicted value of each test instance falls into one of the bins. The $ECE$
calculates Expected Calibration Error over the bins, and $MCE$ calculates the
Maximum Calibration Error among the bins, using empirical estimates as follows:
\begin{eqnarray*}
ECE =  \sum_{i=1}^{10} P(i) \cdot \left|o_i-e_i \right|  &,&  MCE =  \max
\left( \left| o_i-e_i \right| \right),
\end{eqnarray*}
where $o_i$ is the true fraction of positive instances in bin $i$, $e_i$ is the
mean of the post-calibrated probabilities for the instances in bin $i$, and
$P(i)$ is the empirical probability (fraction) of all instances that fall into
bin $i$. The lower the values of $ECE$ and $MCE$, the better is the calibration
of a model.
 
\vspace{-0.08in}
\section{Calibration Theorems}
\vspace{-0.04in}
\label{section:theorem}

In this section we study the properties of the histogram-binning calibration method.
We prove three theorems
that show that this method can improve the calibration power of a classifier
without sacrificing its discrimination capability. 

The first theorem shows that the $MCE$ of
the histogram binning method is concentrated around zero:

\newtheorem{theorem}{Theorem}[section]
\newtheorem{corollary}[theorem]{Corollary}
\newtheorem{lemma}{Lemma}[section]

\newenvironment{definition}[1][Definition]{\begin{trivlist}
\item[\hskip \labelsep {\bfseries #1}]}{\end{trivlist}}
\newenvironment{example}[1][Example]{\begin{trivlist}
\item[\hskip \labelsep {\bfseries #1}]}{\end{trivlist}}
\newenvironment{remark}[1][Remark]{\begin{trivlist}
\item[\hskip \labelsep {\bfseries #1}]}{\end{trivlist}}

\begin{theorem}
\label{th:mce}
Using histogram binning calibration, with probability at
least $1-\delta$ we have $MCE \leq \sqrt{\frac{2B\log{\frac{2B}{\delta}}}{N}}$.
\end{theorem}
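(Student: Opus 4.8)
The plan is to reduce the statement to a standard concentration-plus-union-bound argument. The first step is to unpack what $MCE$ means for histogram binning. Since the method assigns to \emph{every} instance whose raw score falls in bin $B_i$ the single calibrated value $\hat{y} = \hat{\theta}_i = m_i/N_i$, the mean post-calibrated probability in that bin is exactly $e_i = \hat{\theta}_i$, while the quantity $o_i$ plays the role of the true positive rate $\theta_i = \PP\{z=1 \mid y \in B_i\}$ of the large-sample reliability diagram. Thus the per-bin calibration gap is $|o_i - e_i| = |\hat{\theta}_i - \theta_i|$, and the whole problem becomes controlling $MCE = \max_{1 \le i \le B} |\hat{\theta}_i - \theta_i|$ with high probability. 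I would state this reformulation explicitly at the outset, because it is the conceptual heart of the argument; everything after it is mechanical.

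Next I would fix a single bin $B_i$ and observe that $\hat{\theta}_i = m_i / N_i$ is the sample mean of the label indicators $z_k \in \{0,1\}$ of the $N_i$ instances in the bin, each bounded in $[0,1]$ with (conditional) mean $\theta_i$. Hoeffding's inequality then yields, for any $\epsilon > 0$,
\[
\PP\left(|\hat{\theta}_i - \theta_i| \ge \epsilon\right) \le 2\exp\!\left(-2 N_i \epsilon^2\right).
\]
Because the histogram-binning construction sorts the scores and partitions them into equal-size bins, I can substitute $N_i = N/B$, turning the per-bin tail bound into $2\exp(-2(N/B)\epsilon^2)$, which now depends only on the global quantities $N$ and $B$. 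The final step is then a union bound over the $B$ bins: $\PP(MCE \ge \epsilon) = \PP(\max_i |\hat{\theta}_i - \theta_i| \ge \epsilon) \le 2B\exp(-2(N/B)\epsilon^2)$. Setting the right-hand side equal to $\delta$ and solving for $\epsilon$ produces a threshold of the form $\sqrt{(B/(2N))\log(2B/\delta)}$, matching the claimed bound up to the constant inside the square root; I would track the constants carefully (or adopt whichever Hoeffding normalization and bin-size lower bound the authors use) to land exactly on $\sqrt{2B\log(2B/\delta)/N}$.

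The step I expect to be the genuine obstacle is the application of Hoeffding in the second paragraph, because the bin boundaries are \emph{data-dependent} (they are empirical quantiles of the sorted scores) and the labels within a bin need not be identically distributed: each $z_k$ carries its own $\PP\{z=1\mid x_k\}$, only averaging to $\theta_i$. To make the concentration rigorous I would condition on the event that fixes which instances land in bin $B_i$ (so that $N_i$ is fixed and equal to $N/B$), and invoke the version of Hoeffding's inequality for sums of independent, not-necessarily-identical, bounded random variables, which controls the deviation of $\hat{\theta}_i$ from the average of the individual means, i.e.\ from $\theta_i$. Handling this conditioning cleanly, rather than the surrounding algebra, is where the real care is needed.
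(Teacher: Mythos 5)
Your overall strategy --- per-bin Hoeffding concentration for $\hat{\theta}_i$, a union bound over bins, then solving for $\epsilon$ --- is exactly the paper's, and your constants actually come out slightly tighter (the paper slackens the exponent from $e^{-2N\epsilon^2/B}$ to $e^{-N\epsilon^2/2B}$ before solving, which is where the stated $\sqrt{2B\log(2B/\delta)/N}$ comes from; your version would imply the claimed bound a fortiori). But there is a genuine gap in your opening reduction, the very step you call the conceptual heart: you assert that for each MCE bin, $e_i = \hat{\theta}_i$ and $o_i = \theta_i$, so that $MCE = \max_i |\hat{\theta}_i - \theta_i|$. That identity presumes that the bins over which MCE is evaluated coincide with the histogram-binning bins. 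They do not in general: by the paper's definition of MCE (Section 2.1), the \emph{calibrated outputs} are re-sorted and re-binned --- these are bins $\tilde{B}_i$ on the space $p_{out}$, generally different in number and position from the $B$ bins on $p_{in}$ --- and several distinct calibrated values $\hat{\theta}_{i1},\ldots,\hat{\theta}_{ik_i}$, coming from $k_i$ different input bins, can land in the same $\tilde{B}_i$. In that case $e_i$ is a weighted average $\sum_t \alpha_{it}\hat{\theta}_{it}$ and $o_i$ is the matched weighted average $\sum_t \alpha_{it}\theta_{it}$; neither equals a single $\hat{\theta}_i$ or $\theta_i$, so your reformulation of MCE is false as stated.

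The fix is the step the paper supplies and your write-up omits: because the weights $\alpha_{it}$ are identical in the two averages, the triangle inequality gives
\[
|o_i - e_i| \;\le\; \sum_{t} \alpha_{it}\,|\hat{\theta}_{it}-\theta_{it}| \;\le\; \max_{t} |\hat{\theta}_{it}-\theta_{it}|,
\]
so the maximum over output bins is still dominated by the maximum over all $B$ input bins, and your union bound over $B$ bins and the solve-for-$\epsilon$ step go through unchanged. With that one line inserted, your proof is complete and structurally identical to the paper's. Your closing concern about data-dependent (quantile) bin boundaries and non-identically distributed labels is well taken: the paper simply applies Hoeffding as if bin membership were fixed and labels in a bin were i.i.d.\ with mean $\theta_i$, so your proposal to condition on bin membership and invoke Hoeffding for independent, non-identically distributed bounded variables is, on that point, more careful than the published argument.
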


\begin{proof}
For proving this theorem, we first use a concentration result for
$\hat{\theta}_i$. Using Hoeffding's inequality we have the following:

\begin{equation}
\PP\{ |\hat{\theta}_i - \theta| \geq \epsilon \} \leq 2
e^{\frac{-2N\epsilon^2}{B}} \label{eq:theta-concenteration}
\end{equation}

Let's assume $\tilde{B}_i$ is a
bin defined on the space of transformed probabilities $p_{out}$ for calculating
the $MCE$ of histogram binning method. Assume after using histogram binning over
$p_{in}$ (space of uncalibrated probabilities which is generated by the classifier $\phi$),
$\hat{\theta}_{i1},..,\hat{\theta}_{ik_i}$ will be mapped into $\tilde{B}_i$.
We define $o_i$ as the true fraction of positive instances in bin
$\tilde{B}_i$, and $e_i$ as the mean of the post-calibrated probabilities for
the instances in bin $\tilde{B}_i$. Using the notation defined in section
\ref{section:notations}, we can write $o_i$ and $e_i$ as follows:

\begin{equation}
o_i = \frac{\eta_{i1}\theta_{i1} + \ldots + \eta_{ik_i}\theta_{ik_i}}
			{\eta_{i1} + \ldots + \eta_{ik_i}}, \text{    } 
e_i = \frac{\eta_{i1}\hat{\theta}_{i1} + \ldots +\eta_{ik_i}\hat{\theta}_{ik_i}}
			{\eta_{i1} + \ldots + \eta_{ik_i}} \notag\\
\end{equation}
by defining $\alpha_{it} = \frac{\eta_{it}}{\eta_{i1} + \ldots +
\eta_{ik_i}}$ and  using the triangular inequality we have that:

\begin{equation}
|o_i-e_i| \leq  \sum_{t \in \{1,\ldots,k_i\}}
\alpha_{it}|\hat{\theta}_{it}-\theta_{it} | \text{   } 
\leq \text{  } max_{t \in \{1,\ldots,k_i\}} |\hat{\theta}_{it} - \theta_{it} |
\label{eq:convex}
\end{equation}

Using the above result and the concentration inequality
\ref{eq:theta-concenteration} for $\hat{\theta_i}$ we can conclude:

\begin{align}
\PP\{ |o_i-e_i| > \epsilon \} &\leq \PP \{ max_{t \in \{1,\ldots,k_i\}}
|\hat{\theta}_{it} - \theta_{it} | > \epsilon \} 
\leq 2 k_i e^{\frac{-N \epsilon^2}{2B}},
\label{eq:OneBin}
\end{align}
Where the last part is obtained by using a union bound and $k_i$ is the
number of bins on the space $p_{in}$ for which their calibrated probability 
estimate will be mapped into the bin $\tilde{B}_i$.

Using a union bound again over different bins like  $\tilde{B}_i$ 
defined on the space $p_{out}$, we achieve the following probability bound 
for $MCE$ over the space of calibrated estimates $p_{out}$ :
\begin{align}	
\PP \{ \max_{i=1}^B |o_i - e_i| \geq \epsilon \} \leq 
2(K_1 + \ldots + K_B)e^{\frac{-N \epsilon^2}{2B}} \Longrightarrow 
\PP \{ MCE \geq \epsilon \} \leq 
  2B e^{\frac{-N \epsilon^2}{2B}} \notag 
\end{align}

By setting $\delta = 2B e^{\frac{-N \epsilon^2}{2B}}$ we can show that with
probability $1-\delta$ the following inequality holds $MCE \leq
\sqrt{\frac{2B\log{\frac{2B}{\delta}}}{N}}$.
\end{proof}

\begin{corollary}
Using histogram binning calibration method, MCE converges to
zero with the rate of $O(\sqrt{\frac{B \log{B}}{N}})$.
\end{corollary}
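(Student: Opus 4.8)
The plan is to read the rate directly off the high-probability bound already established in Theorem~\ref{th:mce}, and then optionally strengthen it to a statement about the expected calibration error so that the phrase ``converges to zero'' becomes unconditional rather than holding only on an event of probability $1-\delta$.

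First I would fix the confidence level $\delta \in (0,1)$ to be any constant. Theorem~\ref{th:mce} then gives, with probability at least $1-\delta$,
\[
MCE \leq \sqrt{\frac{2B\log\frac{2B}{\delta}}{N}}.
\]
Writing $\log\frac{2B}{\delta} = \log B + \log\frac{2}{\delta}$, the additive term $\log\frac{2}{\delta}$ is a constant that grows with neither $N$ nor $B$, so for large $B$ it is absorbed into $\log B$ and the bound is $O\!\left(\sqrt{\frac{B\log B}{N}}\right)$. Since this quantity tends to $0$ as $N\to\infty$ (with $B$ fixed, or growing slowly enough that $B\log B = o(N)$), we conclude $MCE \to 0$ at the stated rate. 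This is essentially a restatement of the theorem with the $\delta$-dependence folded into the constant.

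To make the convergence unconditional, I would instead bound $\EE[MCE]$ using the tail estimate $\PP\{MCE \geq \epsilon\} \leq 2Be^{-N\epsilon^2/(2B)}$ derived inside the proof of Theorem~\ref{th:mce}. Via the layer-cake identity $\EE[MCE] = \int_0^\infty \PP\{MCE > \epsilon\}\,d\epsilon$, I would split the integral at the threshold $\epsilon_0 = \sqrt{\frac{2B\log(2B)}{N}}$ at which the exponential tail equals $1$: on $[0,\epsilon_0]$ I bound the probability crudely by $1$, contributing exactly $\epsilon_0 = O\!\left(\sqrt{\frac{B\log B}{N}}\right)$, and on $[\epsilon_0,\infty)$ I integrate the Gaussian tail.

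The only step needing care is this Gaussian-tail integral. Substituting $u = \epsilon\sqrt{N/(2B)}$ converts it into $2B\sqrt{2B/N}\int_{\sqrt{\log(2B)}}^\infty e^{-u^2}\,du$, and the standard estimate $\int_a^\infty e^{-u^2}\,du \leq \frac{1}{2a}e^{-a^2}$ with $a=\sqrt{\log(2B)}$ yields $e^{-a^2}=1/(2B)$, so the tail contributes only $O\!\left(\sqrt{\frac{B}{N\log B}}\right)$, which is dominated by $\epsilon_0$. Hence $\EE[MCE] = O\!\left(\sqrt{\frac{B\log B}{N}}\right)$. I do not anticipate a genuine obstacle: the entire task is bookkeeping, namely confirming that the constant and $\log\frac{1}{\delta}$ factors (or, in the expectation route, the leftover Gaussian-tail term) are of lower order than the leading $\sqrt{B\log B/N}$ term, so that the advertised rate is exactly the dominant contribution.
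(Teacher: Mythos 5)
Your first route is exactly the paper's own (implicit) argument: the corollary carries no separate proof in the paper and is meant to be read directly off Theorem \ref{th:mce} by fixing $\delta$ as a constant and absorbing $\log\frac{2}{\delta}$ into the constant factor, so that part matches. Your second, expectation-based route is a genuine addition that the paper does not carry out for MCE, although it is precisely the machinery the paper reserves for its ECE bound in Appendix A (Lemmas \ref{TH:Lemma1} and \ref{TH:Lemma2}, i.e.\ $\EE[X]=\int_0^\infty \PP(X>t)\,dt$ together with the Gaussian integral), applied there to each per-bin error $E_i=\EE\left[|o_i-e_i|\right]$ rather than to the maximum. Your tail computation checks out: splitting at $\epsilon_0=\sqrt{2B\log(2B)/N}$, the substitution $u=\epsilon\sqrt{N/(2B)}$ and the bound $\int_a^\infty e^{-u^2}\,du \leq \frac{1}{2a}e^{-a^2}$ with $a=\sqrt{\log(2B)}$ give a tail contribution of order $\sqrt{B/(N\log B)}$, which is indeed dominated by $\epsilon_0$, so $\EE[MCE]=O\bigl(\sqrt{B\log B/N}\bigr)$. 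What your expectation route buys is an unconditional convergence statement (in mean, hence in probability by Markov's inequality) rather than one holding only on an event of probability $1-\delta$; what the paper's bare restatement buys is brevity, since the corollary is intended only as a high-probability rate summary of Theorem \ref{th:mce}.
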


Next, we prove a theorem for bounding the ECE of the histogram-binning
calibration method as follows:
\begin{theorem}
\label{th:ece}	
Using histogram binning calibration method, ECE converges to
zero with the rate of $O(\sqrt{\frac{B }{N}})$.
\end{theorem}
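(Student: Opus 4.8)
The plan is to bound $\EE[ECE]$ directly via a second-moment argument, which suffices to establish convergence at the stated rate and—crucially—sheds the extra $\sqrt{\log B}$ factor that the union bound produced in Theorem \ref{th:mce}. Writing $ECE = \sum_{i=1}^{B} P(i)\,|o_i - e_i|$ with $P(i)$ the empirical mass of the output bin $\tilde{B}_i$ (so that $\sum_i P(i) = 1$), the first move is to replace the high-probability maximum bound by controlling $\EE[|o_i - e_i|]$ for each output bin separately. Since $e_i = \sum_t \alpha_{it}\hat{\theta}_{it}$ is an unbiased estimate of $o_i = \sum_t \alpha_{it}\theta_{it}$, Jensen's inequality gives $\EE[|o_i - e_i|] \le \sqrt{\EE[(o_i - e_i)^2]} = \sqrt{\mathrm{Var}(e_i)}$, so the whole problem reduces to estimating a variance.

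Next I would compute $\mathrm{Var}(e_i)$. The estimates $\hat{\theta}_{i1},\dots,\hat{\theta}_{ik_i}$ come from disjoint input bins, hence from disjoint samples, so they are independent and the variance decomposes as $\mathrm{Var}(e_i) = \sum_t \alpha_{it}^2\,\mathrm{Var}(\hat{\theta}_{it})$. Each $\hat{\theta}_{it} = m_{it}/N_{it}$ is a binomial proportion, so $\mathrm{Var}(\hat{\theta}_{it}) = \theta_{it}(1-\theta_{it})/N_{it} \le 1/(4N_{it})$, and in the large-sample regime $N_{it} \approx N\eta_{it}$. Substituting $\alpha_{it} = \eta_{it}/\eta_i$ gives
\begin{equation*}
\mathrm{Var}(e_i) \le \sum_t \frac{\eta_{it}^2}{\eta_i^2}\cdot\frac{1}{4N\eta_{it}} = \frac{1}{4N\eta_i^2}\sum_t \eta_{it} = \frac{1}{4N\eta_i},
\end{equation*}
where the last equality uses $\sum_t \eta_{it} = \eta_i$. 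The key structural point is that the per-bin variance scales like $1/(N\eta_i)$, inversely with the bin's own mass.

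Finally I would assemble the pieces. Using $P(i) \approx \eta_i$ and Jensen's inequality (concavity of the square root) across the $B$ output bins,
\begin{equation*}
\EE[ECE] = \sum_{i=1}^{B} P(i)\,\EE[|o_i - e_i|] \le \sqrt{\sum_{i=1}^{B} P(i)\,\mathrm{Var}(e_i)} \le \sqrt{\sum_{i=1}^{B}\eta_i\cdot\frac{1}{4N\eta_i}} = \sqrt{\frac{B}{4N}}.
\end{equation*}
The weight $\eta_i$ cancels the $1/\eta_i$ from the variance, so each of the (at most) $B$ output bins contributes an equal $1/(4N)$ and the sum is simply $B/(4N)$. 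Hence $\EE[ECE] \le \tfrac{1}{2}\sqrt{B/N} = O(\sqrt{B/N})$, and convergence in mean at this rate yields the claimed rate of convergence to zero.

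The main obstacle is the bookkeeping around the empirical-versus-limit substitutions—treating $N_{it}$ as $N\eta_{it}$ and $P(i)$ as $\eta_i$—and justifying that the input-bin estimates are genuinely independent so that the variances add; these are exactly the places where one must keep the nested binning (input bins merging into each output bin $\tilde{B}_i$) consistent with the notation of Section \ref{section:notations}. The conceptual crux, though, is recognizing that $ECE$ is a mass-weighted average rather than a maximum, so replacing the union bound of Theorem \ref{th:mce} with a variance/Jensen argument is precisely what removes the logarithmic factor and sharpens the rate from $O(\sqrt{B\log B/N})$ to $O(\sqrt{B/N})$.
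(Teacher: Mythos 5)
Your proposal is correct in substance, but it follows a genuinely different route from the paper's proof. The paper recycles the concentration inequality from the MCE theorem, namely $\PP\{|o_i - e_i| > \epsilon\} \leq 2 k_i e^{-N\epsilon^2/(2B)}$ (Hoeffding plus a union bound over the $k_i$ input bins merged into $\tilde{B}_i$), and then converts this tail bound into an expectation bound using the identity $\EE[X] = \int_0^\infty \PP(X > t)\,dt$ together with the Gaussian integral $\int_0^\infty e^{-x^2}dx = \sqrt{\pi}/2$, obtaining $E_i = \EE[|o_i - e_i|] \leq C\sqrt{B/N}$ per output bin, and finally argues that ECE is a convex combination of the $E_i$'s. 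Your second-moment argument replaces the tail integration entirely: unbiasedness of $e_i$ plus independence across disjoint input bins gives $\mathrm{Var}(e_i) \leq 1/(4N\eta_i)$, and two applications of Jensen's inequality yield $\EE[ECE] \leq \frac{1}{2}\sqrt{B/N}$. This buys something concrete: the paper's tail integral carries the multiplicity factor $k_i$ (since $\int_0^\infty 2k_i e^{-Nt^2/(2B)}dt = k_i\sqrt{2\pi B/N}$), which it silently absorbs into the constant $C$ — legitimate only when $k_i = O(1)$ — whereas in your computation the merged bins are handled exactly, because the squared weights $\alpha_{it}^2$ cancel against the $1/\eta_{it}$ scaling of each variance, leaving an explicit constant and no dependence on how many input bins collapse into each output bin. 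Your framing also makes transparent \emph{why} ECE escapes the $\sqrt{\log B}$ factor that MCE incurs: it is a mass-weighted average rather than a maximum, so no union bound is needed. Be aware that both your proof and the paper's share the same informal steps — treating $N_{it}$ as $N\eta_{it}$ (exact for equal-frequency bins), identifying the random empirical weights $P(i)$ with $\eta_i$, and ignoring that the assignment of input bins to output bins is itself data-dependent — so your hand-waves are no worse than the paper's; if you wanted to patch the $P(i)$ issue rigorously, a Cauchy--Schwarz step $\EE[P(i)|o_i - e_i|] \leq \sqrt{\EE[P(i)^2]}\sqrt{\mathrm{Var}(e_i)}$ recovers the same rate.
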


\begin{proof}
The proof of this theorem uses the concentration inequality \ref{eq:OneBin}.Due
to space limitations, the details of the proof is stated in the supplementary
part of the paper.
%
\end{proof}

The above two theorems show that we can bound the calibration error of a binary
classifier, which is measured in terms of $MCE$ and $ECE$, by using a
histogram-binning post-processing method. We next show that in addition to
gaining calibration power, by using histogram binning we are guaranteed not to
sacrifice discrimination capability of the base classifier $\phi(.)$ measured in
terms of $AUC$. Recall the definitions of $y_i$ and $\hat{y_i}$, where $y_i =
\phi(x_i)$ is the probability prediction of the base classifier $\phi(.)$ for
the input instance $x_i$, and $\hat{y}_i$ is the transformed estimate for
instance $x_i$ that is achieved by using the histogram-binning calibration
method.

We can define the $AUC\_Loss$ of the histogram-binning calibration method as:
 
\begin{definition}
\emph{($AUC\_Loss$)}
$AUC\_Loss$ is the difference between the AUC of the base classifier
estimate and the AUC of transformed estimate using the histogram-binning calibration
method. Using the notation in Section \ref{section:notations}, it is defined as
$AUC\_Loss = AUC(y) - AUC(\hat{y})$
\end{definition}

Using the above definition, our third theorem bounds the $AUC\_Loss$ of
histogram binning classifier as follows:
\begin{theorem}
\label{th:auc}
Using the histogram-binning calibration method, the worst case
$AUC\_Loss$ is upper bounded by $O(\frac{1}{B})$.
\end{theorem}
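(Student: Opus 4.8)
The plan is to express the $AUC$ through its Mann--Whitney (pairwise ranking) form and then track, pair by pair, how histogram binning changes each term. Writing a generic scoring function $s$ (either the raw score $y$ or the binned score $\hat y$), I would use
\[
AUC(s)=\frac{1}{mn}\sum_{p:\,z_p=1}\ \sum_{q:\,z_q=0}\left(\mathbf{1}\{s_p>s_q\}+\tfrac12\,\mathbf{1}\{s_p=s_q\}\right),
\]
so that $AUC\_Loss$ is an average over all $mn$ positive--negative pairs of the per-pair change in this indicator. I would then split the pairs into \emph{within-bin} pairs (positive and negative lying in the same bin $B_i$) and \emph{cross-bin} pairs (the two instances lying in different bins). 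Under binning every instance in $B_i$ receives the identical score $\hat\theta_i=m_i/N_i$, so every within-bin pair becomes a tie contributing exactly $\tfrac12$, while cross-bin pairs are ordered by comparing $\hat\theta_i$ across bins.

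The crux of the argument --- and the step I expect to be the main obstacle --- is showing that the cross-bin contribution never decreases under binning, so that all of the loss is carried by within-bin ties. Fix two bins with $a<b$ in the ordering induced by $y$ (so every raw score in $B_a$ is below every raw score in $B_b$). The raw scores already rank these bins correctly, so the original cross contribution of the pair $\{a,b\}$ equals $m_b n_a$. After binning the contribution is governed by $\hat\theta_a$ versus $\hat\theta_b$, and the key observation is the equivalence $\hat\theta_a>\hat\theta_b \iff \frac{m_a}{m_a+n_a}>\frac{m_b}{m_b+n_b} \iff m_a n_b > m_b n_a$. Hence whenever binning reverses the order of the two bins it does so precisely in the direction that increases the count of correctly ranked pairs (from $m_b n_a$ up to $m_a n_b$), and when it leaves the order unchanged or ties it, the contribution stays at $m_b n_a$. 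In every case the binned cross-bin contribution is at least the original one, so cross-bin pairs can only help the $AUC$.

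Given this lemma, only within-bin pairs can create loss, and there the loss is maximized when the raw scores ranked every within-bin pair correctly, costing at most $m_i n_i-\tfrac12 m_i n_i=\tfrac12 m_i n_i$ in bin $B_i$. Summing gives $AUC\_Loss \le \frac{1}{2mn}\sum_{i=1}^{B} m_i n_i$. The final step is to bound $\sum_i m_i n_i$ under the equal-size constraint $N_i=N/B$ used by histogram binning: since each term $m_i n_i=m_i\,(N/B-m_i)$ is concave in $m_i$ and $\sum_i m_i=m$, the sum is maximized at the even allocation $m_i=m/B$, yielding $\sum_i m_i n_i \le mn/B$. Substituting back gives $AUC\_Loss \le \frac{1}{2B}=O(1/B)$, as claimed. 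The only delicate points to verify are the degenerate cases $n_i=0$ or $m_i=0$ (a bin with no negatives or no positives, which only removes pairs and so cannot create loss) and the tie-handling convention, neither of which should break the cross-bin monotonicity equivalence.
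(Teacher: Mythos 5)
Your proposal is correct, and although it starts from the same skeleton as the paper's proof --- the Mann--Whitney pairwise form of $AUC$, a decomposition into cross-bin and within-bin positive--negative pairs, the claim that cross-bin pairs can only gain from binning, and a $\frac{1}{2B}$ bound on the within-bin loss driven by equal-frequency binning --- both halves are executed by genuinely different arguments. For the cross-bin half, the paper works at the population level: it needs Hoeffding concentration of $\hat{\theta}_K$ around $\theta_K$, an explicit assumption $\hat{\theta}_K \neq \hat{\theta}_L$ for $K \neq L$, and concludes only that the relevant term $\theta_K(1-\theta_L) - \theta_L(1-\theta_K) \leq 0$ \emph{with high probability}. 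Your argument is deterministic and exact: the algebraic equivalence $\hat{\theta}_a > \hat{\theta}_b \iff m_a n_b > m_b n_a$ shows that any rank reversal induced by binning strictly increases the number of correctly ranked pairs, with no probabilistic qualifier and no distinctness assumption (ties are handled exactly, since $\hat{\theta}_a=\hat{\theta}_b$ forces $m_a n_b = m_b n_a$, so the tied contribution $\tfrac12(m_a n_b + m_b n_a)$ equals the original $m_b n_a$). For the within-bin half, the paper passes through Bayes' rule, concentration of $\hat{\eta}_K$ around $\tfrac{1}{B}$, and Chebyshev's sum inequality applied to the $\theta_k$'s; you instead bound the empirical loss directly by $\frac{1}{2mn}\sum_i m_i n_i$ and maximize under $N_i = N/B$ by concavity (equivalently, minimizing $\sum_i m_i^2$ via Cauchy--Schwarz), which is the same inequality in combinatorial clothing. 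The trade-off: your route gives an exact finite-sample bound on the \emph{empirical} AUC loss with no asymptotics, while the paper's statement concerns the population AUC; transferring your bound to the population quantity requires one additional concentration step (e.g., the McDiarmid-based concentration of empirical AUC that the paper itself cites). The degenerate cases you flag --- boundary ties in the raw scores and bins empty of one class --- are real but harmless, exactly as you say.
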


\begin{proof}
Due to space limitations, the proof of this theorem is stated in the appendix
section in the supplementary part of the paper.
\end{proof}

Using the above theorems, we can conclude that by using the histogram-binning
calibration method we can improve calibration performance of a classifier measured in
terms of $MCE$ and $ECE$ without losing discrimination performance of the base
classifier measured in terms of $AUC$. 

We will show in Section \ref{section:methods} that the histogram binning
calibration method is simply a non-parametric plug-in classifier. By
casting histogram binning as a non-parametric histogram binary classifier, there
are other results that show the histogram classifier is a mini-max rate
classifier for Lipschitz Bayes decision boundaries
\cite{devroye1996probabilistic}. Although the results are valid for histogram
classifiers with fixed bin size, our experiments show that both fixed bin
size and fixed frequency histogram classifiers behave quite similarly. We conjecture
that a histogram classifier with equal frequency binning is also a mini-max (or
near mini-max) rate classifier\cite{scott2003near,klemela2009multivariate}; this
is an interesting open problem that we intend to study in the future.
These results make histogram binning a reasonable choice for binary
classifier calibration under the condition $B \rightarrow \infty$ and
$\frac{N}{B \log{B}} \rightarrow \infty$ as $N \rightarrow \infty$. This could
be achieved by setting $B \simeq N^{\frac{1}{3}}$, which is the optimum number
of bins in order to have optimal convergence rate results for the non-parametric
histogram classifier \cite{devroye1996probabilistic}.

\vspace{-0.08in} 
\section{Calibration Methods}
\vspace{-0.05in}

\label{section:methods} 
\interfootnotelinepenalty=10000
In this section, we show that the histogram-binning calibration method
\cite{zadrozny2001obtaining} is a simple nonparametric plug-in 
classifier. In the calibration problem, given an
uncalibrated probability estimate $y$, one way of finding the calibrated
estimate $\hat{y} = \PP(Z=1|y)$ is to apply Bayes' rule as follows:

\begin{equation}
 	\PP(Z=1|y) = \frac{P(z=1) \cdot P(y |z=1)} 
				{p(z=1) \cdot P(y |z=1)+ P(z=0)\cdot P(y |z=0)},
	\label{Eq:Bayes-Rule}
\end{equation} 
where $P(z=0)$ and $P(z=1)$ are the priors of class $0$ and $1$ that are
estimated from the training dataset. Also, $P(y|z=1)$ and
$P(y|z=0)$ are predictive likelihood terms. If we use the histogram
density estimation method for estimating the predictive likelihood terms in the Bayes
rule equation \ref{Eq:Bayes-Rule} we obtain the following: 
$\hat{P}(y | z = t) = \sum_{j=1}^{B} \frac{\hat{\theta}_j^t}{h_j} I(y \in B_j)$,
where $t=\{0,1\}$,  $\hat{\theta}_j^0 = \frac{1}{n} \sum_{i=1}^{N} I(y_i \in
B_j, z_i = 0)$, and $\hat{\theta}_j^1 = \frac{1}{m} \sum_{i=1}^{N} I(y_i \in
B_j, z_i = 1)$ are the empirical estimates of the probability of a prediction
when class $z = t$ falls into bin $B_j$. Now, let us assume $y \in
B_j$; using the assumptions in Section \ref{section:notations}, by
substituting the value of empirical estimates of $\hat{\theta_j^0}=
\frac{n_j}{n}$, $\hat{\theta_j^1}= \frac{m_j}{m}$, $\hat{P}(z=0)=\frac{n}{N}$,
$\hat{P}(z=1)=\frac{m}{N}$ from the training data and performing some basic algebra
we obtain the following calibrated estimate: $\hat{y} = \frac{m_j}{m_j +
n_j}$,
where $m_i$ and $n_j$ are the number of positive and negative examples in bin $B_j$.

The above computations show that the histogram-binning calibration method is actually a
simple plug-in classifier where we use the histogram-density method for
estimating the predictive likelihood in terms of Bayes rule as given by
\ref{Eq:Bayes-Rule}. By casting histogram binning as a plug-in method for
classification, it is possible to use more advanced frequentist methods
for density estimation rather than using simple histogram-based density
estimation. For example, if we use kernel density estimation (KDE) for estimating the predictive likelihood
terms, the resulting calibrated probability $P(Z=1|X=x)$ is as
follows:

\begin{equation}
\hat{P}(Z=1|X=x) = \frac{n h_0 \sum_{X_i \in X^+} K\left(\frac{|x-X_i|}{h_1}
\right)}
{n h_0 \sum_{X_i \in X^+} K\left(\frac{|x-X_i|}{h_1}
\right) +  m h_1 \sum_{X_i \in X^-} K\left(\frac{|x-X_i|}{h_0}
\right)},
\end{equation}

where $X_i$ are training instances, and $m$ and $n$ are
respectively the number of positive and negative examples in training
data. Also $h_0$ and $h_1$ are the bandwidth of the predictive likelihood for
class $0$ and class $1$. The bandwidth parameters can be optimized using cross
validation techniques. However, in this paper we used Silverman's rule of thumb
\cite{silverman1986density} for setting the bandwidth to $h = 1.06
\hat{\sigma} N^{-\frac{1}{5}} $, where $\hat{\sigma}$ is the empirical unbiased estimate of
variance. It is possible to use the same bandwidth for both class
$0$ and class $1$, which leads to the Nadaraya-Watson kernel estimator that we
use in our experiments. However, we noticed that there are some cases for which
KDE with different bandwidths performs better. 

There are different types of smoothing kernel functions, as the Gaussian, Boxcar,
Epanechnikov, and Tricube functions. Due to the similarity of the results we obtained when using
different type of kernels, we only report here the results of the simplest one, which
is the Boxcar kernel.

It has been shown in \cite{wasserman2006all} that kernel density estimators are
mini-max rate estimators, and under the $L_2$ loss function the risk
of the estimator converges to zero with the rate of $O_P
(n^{\frac{-2\beta}{(2\beta+d)} })$, where $\beta$ is a measure of smoothness of
the target density, and $d$ is the dimensionality of the input data.
From this convergence rate, we can infer that the application of kernel density
estimation is likely to be practical when $d$ is low. Fortunately, for the binary
classifier calibration problem, the input space of the model is the space of
uncalibrated predictions, which is a one-dimensional input space.
This justifies the application of KDE to the classifier
calibration problem.
 
The KDE approach presented above represents a
non-parametric frequentist approach for estimating the likelihood terms of
equation \ref{Eq:Bayes-Rule}.  Instead of using the frequentist approach, we can
use Bayesian methods for modeling the density functions. The Dirichlet Process
Mixture (DPM) method is a well-known Bayesian approach for density estimation
\cite{antoniak1974mixtures, ferguson1973bayesian,
escobar1995bayesian,maceachern1998estimating}. For building a Bayesian
calibration model, we model the predictive likelihood terms $P(X_i=x | Z_i=1)$
and $P(X_i=x | Z_i=0)$ in Equation \ref{Eq:Bayes-Rule} using the $DPM$ method.
Due to a lack of space, we do not present the details of the DPM model here, but
instead refer the reader to \cite{antoniak1974mixtures, ferguson1973bayesian,
escobar1995bayesian,maceachern1998estimating}.

There are different ways of performing inference in a $DPGM$ model. One
can choose to use either Gibbs sampling (non-collapsed or collapsed) or
variational inference, for example. In implementing our calibration model, we
use the variational inference method described in
\cite{kurihara2007accelerated}. We chose it because it
has fast convergence. We will refer to it as $DPM$.

\vspace{-0.08in}

\section{Empirical Results}
\vspace{-0.04in}

\label{section:experiments}

\begin{table*}[!t]
	\caption{Experimental Results on Simulated dataset}
    \begin{subtable}[b]{0.52\textwidth}
    \centering
 		  \tabcolsep 4.2pt
		  \caption{SVM Linear}
		  \resizebox{7cm}{!} {
				\tabcolsep=0.11cm 
				\begin{tabular}{@{}l@{}cccccc@{}}
			    \toprule
			      & SVM & Hist & Platt & IsoReg & KDE & DPM \\
			    \midrule
			    RMSE  & 0.50  & 0.39  & 0.50  & 0.46  & 0.38  & 0.39 \\
			    AUC   & 0.50  & 0.84  & 0.50  & 0.65  & 0.85  & 0.85 \\
			    ACC   & 0.48  & 0.78  & 0.52  & 0.64  & 0.78  & 0.78 \\
			    MCE   & 0.52  & 0.19  & 0.54  & 0.58  & 0.09  & 0.16 \\
			    ECE   & 0.28  & 0.07  & 0.28  & 0.35  & 0.03  & 0.07 \\
			    \bottomrule
			    \end{tabular}%
			 }
			\label{tab:SimulatedData_SVM_Linear}
	 \end{subtable}
	 \begin{subtable}[b]{0.52\textwidth} 
     	  \centering
 		  \tabcolsep 4.2pt
		  \caption{SVM Quadratic Kernel}
		  \resizebox{7cm}{!} {
			\tabcolsep=0.11cm
			\begin{tabular}{@{}l@{}cccccc@{}}
		    \toprule
		      & SVM & Hist & Platt & IsoReg & KDE & DPM \\
		    \midrule
		    RMSE  & 0.21  & 0.09  & 0.19  & 0.08  & 0.09  & 0.08 \\
		    AUC   & 1.00  & 1.00  & 1.00  & 1.00  & 1.00  & 1.00 \\
		    ACC   & 0.99  & 0.99  & 0.99  & 0.99  & 0.99  & 0.99 \\
		    MCE   & 0.35  & 0.04  & 0.32  & 0.03  & 0.07  & 0.03 \\
		    ECE   & 0.14  & 0.01  & 0.15  & 0.00  & 0.01  & 0.00 \\
		    \bottomrule
		    \end{tabular}%
		 }
		 \label{tab:SimulatedData_SVM_Quadratic}
	 \end{subtable}
	 \label{tab:SimulatedData}
\end{table*}

This section describes the set of experiments that we performed to evaluate the
performance of calibration methods described above. To evaluate the
calibration performance of each method, we ran experiments on both simulated
and on real data. For the evaluation of the calibration methods, we used $5$ different measures.
The first two measures are Accuracy (Acc) and the Area Under the ROC Curve
(AUC), which measure discrimination. The three other measures are the Root Mean
Square Error (RMSE), Expected Calibration Error (ECE), and Maximum Calibration
Error (MCE), which measure calibration.

{\bf Simulated data}. For the simulated data experiments, we used a binary
classification dataset in which the outcomes were not linearly separable. The
scatter plot of the simulated dataset is shown in Figure
\ref{fig:SimulationData}. The data were divided into $1000$ instances for
training and calibrating the prediction model, and $1000$ instances for testing the models.

To conduct the experiments on simulated datasets, we used two extreme
classifiers: support vector machines (SVM) with linear and quadratic
kernels. The choice of SVM with linear kernel allows us to see how the
calibration methods perform when the classification model makes over simplifying
(linear) assumptions. Also, to achieve good discrimination on the data in
figure \ref{fig:SimulationData}, SVM with quadratic kernel is
intuitively an ideal choice.  So, the experiment using quadratic kernel SVM
allows us to see how well different calibration methods perform when we use an
ideal learner for the classification problem, in terms of discrimination.

\begin{table*}[!t]
	\caption{Experimental results on size of calibration dataset}		 
     \begin{subtable}[b]{0.51\textwidth} 
     \centering
 		  \tabcolsep 4.2pt
		  \caption{SVM Linear}
		  \resizebox{7cm}{!} {
				 	 \begin{tabular}{@{}lccccc|c@{}}
						\toprule
				          & $10^2$     & $10^3$     & $10^4$     & $10^5$     & $10^6$     &
				           {\small Base SVM} \\ 
					    \midrule
					    AUC   & 0.82  & 0.84  & 0.85  & 0.85  & 0.85  & 0.49 \\
					    MCE   & 0.40  & 0.15  & 0.07  & 0.05  & 0.03  & 0.52 \\
					    ECE   & 0.14  & 0.05  & 0.03  & 0.02  & 0.01  & 0.28 \\
					   \bottomrule
				    \end{tabular}%
		  }
		  
	 \end{subtable}
	 \begin{subtable}[b]{0.51\textwidth} 
     \centering
 		  \tabcolsep 4.2pt
		  \caption{SVM Quadratic Kernel}
		  \resizebox{7cm}{!} {
				 	\begin{tabular}{@{}lccccc|c@{}}
				   		\toprule
				          & $10^2$     & $10^3$     & $10^4$     & $10^5$     & $10^6$     &
				          {\small Base SVM}  \\
					    \midrule
					    AUC   & 0.99  & 1.00  & 1.00  & 1.00  & 1.00  & 1.00 \\
					    MCE   & 0.14  & 0.09  & 0.03  & 0.01  & 0.01  & 0.36 \\
					    ECE   & 0.03  & 0.01  & 0.00  & 0.00  & 0.00  & 0.15 \\
					    \bottomrule
				    \end{tabular}%
		  }
		  
	 \end{subtable}
	 \label{tab:calibration-size}
\end{table*}

As seen in Table \ref{tab:SimulatedData}, KDE and DPM based calibration
methods performed better than Platt and isotonic regression in the simulation datasets,
especially when the linear SVM  method is used as the base learner. The poor
performance of Platt is not surprising given its simplicity, which consists of a
parametric model with only two parameters. However, isotonic regression is a
nonparametric model that only makes a monotonicity assumption over the output of
base classifier. When we use a linear kernel SVM, this assumption is
violated because of the non-linearity of data. As a 
result, isotonic regression performs relatively poorly, in terms of
improving the discrimination and calibration capability of a base classifier.
The violation of this assumption can happen in real data as well. In order to mitigate this pitfall,
Menon et. all \cite{menon2012predicting} proposed using a combination of
optimizing $AUC$ as a ranking loss measure, plus isotonic regression for
building a ranking model. However, this is counter
to our goal of developing post-processing methods that can be used with any
existing classification models. As shown in Table
\ref{tab:SimulatedData_SVM_Quadratic}, even if we use an ideal SVM classifier
for these linearly non-separable datasets, our proposed methods perform better
or as well as does isotonic regression based calibration. 

As can be seen in Table \ref{tab:SimulatedData_SVM_Quadratic}, although the SVM base
learner performs very well in the sense of discrimination based on AUC and Acc
measures, it performs poorly in terms of calibration, as measured by RMSE, MCE, and ECE.
Moreover, all of the calibration methods retain the same discrimination
performance that was obtained prior of post-processing, while improving
calibration. 

Also, Table \ref{tab:calibration-size} shows the results of experiments on using
the histogram-binning calibration method for different sizes of calibration sets on
the simulated data with linear and quadratic kernels. In these experiments we set
the size of training data to be $1000$ and we fixed $10000$ instances for testing
the methods. For capturing the effect of calibration size, we change the
size of calibration data from $10^2$ up to $10^6$, running the experiment $10$
times for each calibration set and averaging the results. As seen in Table
\ref{tab:calibration-size}, by having more calibration data, we have a steady decrease in
the values of the $MCE$ and $ECE$ errors.

{\bf Real data}. In terms of real data, we used a KDD-98 data set, which is
available at UCI KDD repository. 
The dataset  contains information about
people who donated to a particular charity. Here the decision making
task is to decide whether a solicitation letter should be mailed to a person
or not. The letter costs (which costs $ \$ 0.68 $). The training set includes $95,412$ instances
in which it is known whether a person made a donation, and if so, how much
the person donated. Among all these training cases, $4,843$ were
responders. The validation set includes $96,367$ instances from the same
donation campaign of which $4,873$ where responders. 

Following the procedure in
\cite{zadrozny2001obtaining,zadrozny2002transforming}, we build two models: a
\textit{response model} $r(x)$  for predicting the probability of responding to
a solicitation, and the \textit{amount model} $a(x)$ for predicting the amount
of donation of person $x$. The optimal mailing policy is to send a letter to those
people for whom the expected donation return $r(x) a(x)$ is greater than the
cost of mailing the letter. Since in this paper we are not concerned with
feature selection, our choice of attributes are based on
\cite{mayer2003experimental} for building the response and amount prediction
models. Following the approach in \cite{zadrozny2001learning}, we build the
amount model on the positive cases in the training data, removing the cases with
more than $ \$ 50 $ as outliers. Following their construction we also provide
the output of the response model $r(x)$ as an augmented feature to the amount model $a(x)$.

In our experiments, in order to build the response model, we used three
different classifiers: $SVM$, $Logistic Regression$ and $naive Bayes$. For building the amount model, we
also used a support vector regression model. For implementing these models we used
the liblinear package \cite{REF08a}. The results of the experiment are shown in
Table \ref{tab:KDD}. In addition to previous measures of comparison, we also
show the amount of profit obtained when using different methods.
As seen in these tables, the application of calibration methods results in at
least $ \$ 3000$ more in expected net gain from sending solicitations.

\begin{table*}
  \caption{Experimental Results on KDD 98 dataset}
  \begin{subtable}[b]{0.52\textwidth}
			  \centering
			  \tabcolsep 4.2pt
			  \caption{Logistic Regression}
			  \resizebox{7cm}{!} {
					\begin{tabular}{@{}lcccccc@{}}
				    \toprule
				    \multicolumn{1}{c}{} & \multicolumn{1}{c}{LR} & \multicolumn{1}{c}{Hist} & \multicolumn{1}{c}{Plat} & \multicolumn{1}{c}{IsoReg} & \multicolumn{1}{c}{KDE} & \multicolumn{1}{c}{DPM} \\
				    \midrule
				    RMSE  & 0.500 & 0.218 & 0.218 & 0.218 & 0.218 & 0.219 \\
				    AUC   & 0.613 & 0.610 & 0.613 & 0.612 & 0.611 & 0.613 \\
				    ACC   & 0.56  & 0.95  & 0.95  & 0.95  & 0.95  & 0.95 \\
				    MCE   & 0.454 & 0.020 & 0.013 & 0.030 & 0.004 & 0.017 \\
				    ECE   & 0.449 & 0.007 & 0.004 & 0.013 & 0.002 & 0.003 \\
				    Profit & 10560 & 13183 & 13444 & 13690 & 12998 & 13696 \\
				    \bottomrule
				    \end{tabular}%
			  }
			  \label{tab:KDD_LR}%
	\end{subtable}
    \begin{subtable}[b]{0.52\textwidth}
	  \centering
	  \tabcolsep 4.2pt
	  \caption{Na\"{i}ve Bayes }
		\resizebox{7cm}{!} {
				\begin{tabular}{@{}lcccccc@{}}
			    \toprule 
			          & \multicolumn{1}{c}{NB} & \multicolumn{1}{c}{Hist} & \multicolumn{1}{c}{Plat} & \multicolumn{1}{c}{IsoReg} & \multicolumn{1}{c}{KDE} & \multicolumn{1}{c}{DPM} \\
			    \midrule
			    RMSE  & 0.514 & 0.218 & 0.218 & 0.218 & 0.218 & 0.218 \\
			    AUC   & 0.603 & 0.600 & 0.603 & 0.602 & 0.602 & 0.603 \\
			    ACC   & 0.622 & 0.949 & 0.949 & 0.949 & 0.949 & 0.949 \\
			    MCE   & 0.850 & 0.008 & 0.008 & 0.046 & 0.005 & 0.010 \\
			    ECE   & 0.390 & 0.004 & 0.004 & 0.023 & 0.002 & 0.003 \\
			    Profit & 7885  & 11631 & 10259 & 10816 & 12037 & 12631 \\
			    \bottomrule
			    \end{tabular}%
	    }
	  \label{tab:KDD_NB}%
  \end{subtable}
  \begin{subtable}[b]{\textwidth}
		   \centering  
		  \tabcolsep 4.2pt
		  \caption{SVM Linear}
		  \resizebox{7.2cm}{!} {
				\begin{tabular}{@{}lcccccc@{}}
			    \toprule
			    \multicolumn{1}{c}{} & \multicolumn{1}{c}{SVM} & \multicolumn{1}{c}{Hist} & \multicolumn{1}{c}{Plat} & \multicolumn{1}{c}{IsoReg} & \multicolumn{1}{c}{KDE} & \multicolumn{1}{c}{DPM} \\
			    \midrule
			    RMSE  & 0.696 & 0.218 & 0.218 & 0.219 & 0.218 & 0.218 \\
			    AUC   & 0.615 & 0.614 & 0.615 & 0.500 & 0.614 & 0.615 \\
			    ACC   & 0.95  & 0.95  & 0.95  & 0.95  & 0.95  & 0.95 \\
			    MCE   & 0.694 & 0.011 & 0.013 & 0.454 & 0.003 & 0.019 \\
			    ECE   & 0.660 & 0.004 & 0.004 & 0.091 & 0.002 & 0.004 \\
			    Profit & 10560 & 13480 & 13080 & 11771 & 13118 & 13544 \\
			    \bottomrule
			    \end{tabular}%
		    }
		  \label{tab:KDD_SVM}%
  \end{subtable}
  \label{tab:KDD}
\end{table*}

\vspace{-0.04in}
\subsection{The Calibration Dataset}
\vspace{-0.04in}

In all of our experiments, we used the same training data for model calibration
as we used for model construction. In doing so, we did not notice any
over-fitting. However, if we want to be completely sure not to over-fit on the
training data, we can do one of the following:
 
\begin{itemize}
  \item \emph{Data Partitioning:} This approach uses different data sets for model training and model calibration.
  The amount of data that is needed to calibrate models is
  generally much less than the amount needed to train them, because the
  calibration feature space has a single dimension. We observed that
  approximately $1000$ instances are sufficient for obtaining well calibrated
  models, as is seen in table [\ref{tab:calibration-size}].
  \item \emph{Leave-one-out: } If the amount of available training data is small,
  and it not possible to do data partitioning, we can use a leave-one-out (or k-fold
  variation) scheme for building the calibration dataset. In this approach we
  learn a model based on $N-1$ instances, test it on the one remaining 
  instance, and save the resulting one calibration instance $(x_i, \hat{y}_{-i}, z_i)$, where $\hat{y}_{-i}$
  is the predicted value for $x_i$ using the model trained on the remaining
  data points. We repeat the process for all examples and we have the
  calibration dataset $\{(x_i, \hat{y}_{-i}, z_i) \}_{i=1}^N$
\end{itemize}
\vspace{-0.08in}
\section{Conclusion}
\vspace{-0.08in}
 
\label{section:conclusion}
In this paper, we described two measures for evaluating the calibration
capability of a binary classifier called \textit{maximum calibration error} (MCE) and
\textit{expected calibration error} (ECE). We also proved three theorems that
justify post processing as an approach for calibrating binary classifiers.
Specifically, we showed that by using a simple histogram-binning calibration
method we can improve the calibration of a binary classifier, in terms of $MCE$
and $ECE$, without sacrificing the discrimination performance of the classifier,
as measured in terms of $AUC$. The other contribution of this paper is to
introduce two extensions of the histogram-binning method that are based on
kernel density estimation and on the Dirichlet process mixture model. Our
experiments on simulated and real data sets showed that the proposed methods
performed well and are promising, when compared to two popular, existing
calibration methods. 

In future work, we plan to investigate the conjecture that histogram-binning
that uses equal frequency bins is a mini-max (or near mini-max) rate classifier,
as equal width binning is known to be. Our extensive experimental studies
comparing histogram binning with equal frequency and equal width bins provides support
that this conjecture is true. We also would like to show similar theoretical
proofs for kernel density estimation. Another direction for future research is
to extend the methods described in this paper to multi-class calibration
problems.

\section{Appendix A}
\label{section:Appendix}
In this appendix, we give the sketch of the proofs for the ECE and AUC bound
theorems mentioned in Section 3 (Calibration Theorems). It would be helpful to
review the Section 2 (Notations and Assumptions) of the paper before reading
the proofs.

\subsection{ECE Bound Proof} 
Here we show that using histogram binning calibration method, ECE converges to
zero with the rate of $O(\sqrt{\frac{B }{N}})$. Lest's define $E_i$ as the
expected calibration loss on bin $\tilde{B}_i$ for the histogram binning method.
Following the assumptions mentioned in Section 3 about MCE bound theorem, we
have $E_i = E(|e_i - o_i|)$. Also, using the definition of ECE and the notations
in Section 2, we can rewrite ECE as
the convex combination of $E_i$s. As a result, in order to bound ECE it suffices to show that all of its
$E_i$ components are bounded. Recall the concentration results proved in MCE
bound theorem in the paper we have:
\begin{equation}
\PP\{ |o_i-e_i| > \epsilon \} \leq 2 k_i e^{\frac{-N \epsilon^2}{2B}},
\label{eq:onebin}
\end{equation}

also let's recall the following two identities:

\begin{lemma}
\label{TH:Lemma1}
if X is a positive random variable then $E[X] = \int_0^\infty  \PP(X>t) dt$
\end{lemma}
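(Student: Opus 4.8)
The plan is to prove this standard tail-integral (``layer-cake'') identity by expressing the random variable itself as an integral of an indicator and then exchanging the order of the $dt$-integration and the expectation. First I would observe that for any fixed nonnegative outcome $X = x$ we have the elementary pointwise identity $x = \int_0^\infty I(t < x)\, dt$, since the integrand equals $1$ precisely on the interval $(0,x)$ and $0$ elsewhere. Applying this with the random value of $X$ yields the almost-sure representation $X = \int_0^\infty I(t < X)\, dt$.

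Next I would take expectations of both sides and push the expectation inside the integral, writing $\EE[X] = \EE\big[\int_0^\infty I(t < X)\, dt\big] = \int_0^\infty \EE[I(t < X)]\, dt$. Because $I(t < X) = I(X > t)$ is a nonnegative measurable function of the pair $(t, \omega)$, this exchange is justified by Tonelli's theorem, which permits swapping a nonnegative integrand's $dt$-integral with the expectation (an integral against the probability measure) without requiring any integrability hypothesis in advance. Finally, I would identify $\EE[I(X > t)] = \PP(X > t)$, which gives $\EE[X] = \int_0^\infty \PP(X > t)\, dt$, as claimed.

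The only delicate point is the justification of the interchange, and the positivity assumption on $X$ is exactly what lets me invoke Tonelli rather than the full Fubini theorem (the latter would demand knowing $\EE[X] < \infty$ beforehand). If one prefers to avoid measure-theoretic machinery and is willing to assume $X$ admits a density $f$, an equivalent route is a direct change in the order of integration: write $\int_0^\infty \PP(X > t)\, dt = \int_0^\infty \big(\int_t^\infty f(x)\, dx\big)\, dt$ and reverse the order over the triangular region $\{0 < t < x\}$ to recover $\int_0^\infty x\, f(x)\, dx = \EE[X]$. I expect the whole argument to be short, with the Tonelli step being the one worth stating explicitly, as it is the sole nontrivial ingredient and its hypothesis is met precisely because $X$ is positive.
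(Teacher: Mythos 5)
Your proof is correct: the pointwise identity $x=\int_0^\infty I(t<x)\,dt$ for $x\ge 0$, followed by Tonelli's theorem to exchange the expectation with the $dt$-integral, is the standard layer-cake argument, and your remark that positivity of the integrand is precisely what licenses Tonelli (so the identity holds even when both sides are infinite, with no prior integrability assumption) is the right delicate point to flag. Note that the paper itself gives no proof of this lemma at all --- it is merely ``recalled'' as a known identity to be used in the ECE bound argument --- so your write-up supplies a justification the paper delegates to standard references, and it is the canonical one.
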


\begin{lemma}
\label{TH:Lemma2}
 $\int_0^\infty  e^{-x^2} dx = \frac{\sqrt \pi}{2}$
\end{lemma}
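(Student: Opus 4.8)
The plan is to evaluate this Gaussian integral by the classical squaring-and-polar-coordinates trick. Write $I = \int_0^\infty e^{-x^2}\,dx$; this integral is finite and positive, since $e^{-x^2}$ is positive and decays faster than any power of $x$. The key observation is that although $I$ itself has no elementary antiderivative, the square $I^2$ can be written as a two-dimensional integral over the first quadrant that becomes elementary after passing to polar coordinates.

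First I would form the product of two copies of $I$ (using a dummy variable $y$ in the second) and combine them into a double integral,
\begin{equation}
I^2 = \left(\int_0^\infty e^{-x^2}\,dx\right)\left(\int_0^\infty e^{-y^2}\,dy\right) = \int_0^\infty\!\!\int_0^\infty e^{-(x^2+y^2)}\,dx\,dy,
\end{equation}
where the passage to the iterated integral is justified by Tonelli's theorem, the integrand being nonnegative. Next I would change to polar coordinates $x = r\cos\theta$, $y = r\sin\theta$, whose Jacobian is $r$; the first quadrant corresponds exactly to $r \in [0,\infty)$ and $\theta \in [0,\pi/2]$. This gives
\begin{equation}
I^2 = \int_0^{\pi/2}\!\!\int_0^\infty e^{-r^2}\,r\,dr\,d\theta.
\end{equation}

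The radial integral is now elementary: the substitution $u = r^2$ yields $\int_0^\infty r\,e^{-r^2}\,dr = \tfrac{1}{2}$, while the angular integral contributes the factor $\pi/2$, so that $I^2 = \tfrac{\pi}{2}\cdot\tfrac{1}{2} = \tfrac{\pi}{4}$. Finally, since $I > 0$, taking the positive square root gives $I = \frac{\sqrt{\pi}}{2}$, as claimed.

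The only genuinely nonobvious ingredient is the squaring trick that converts a one-dimensional non-elementary integral into a separable two-dimensional one; once that device is in hand, every remaining step—the Tonelli interchange, the polar change of variables, and the radial substitution—is routine, so I do not expect any real obstacle. If one preferred to avoid polar coordinates entirely, an alternative route would be to recognize $I$ as a value of the Gamma function via the substitution $t = x^2$, namely $I = \tfrac{1}{2}\Gamma(\tfrac{1}{2})$, and invoke $\Gamma(\tfrac{1}{2}) = \sqrt{\pi}$; but the polar-coordinates argument is self-contained and is the one I would present.
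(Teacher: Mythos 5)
Your proof is correct and complete: the squaring device, the Tonelli justification for passing to the double integral, the polar change of variables with Jacobian $r$, the substitution $u=r^2$ giving the radial factor $\tfrac{1}{2}$, and the final selection of the positive square root are all in order. Note that the paper itself states this lemma without any proof, invoking it as a standard identity in the ECE bound argument, so your polar-coordinates derivation (the canonical one) simply supplies what the paper leaves implicit; the alternative you mention, writing $I=\tfrac{1}{2}\Gamma(\tfrac{1}{2})$ via $t=x^2$, would do equally well but trades self-containedness for a citation of $\Gamma(\tfrac{1}{2})=\sqrt{\pi}$, which is itself usually proved by the very argument you gave.
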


Now, using the concentration result in Equation \ref{eq:onebin} and
applying the two above identities we can bound $E_i$ to write $E_i \leq C
\sqrt{\frac{B}{N}}$, where $C$ is a constant. Finally, since $ECE$ is the convex
combination of $E_i$'s we can conclude that
using histogram binning method, $ECE$ converges to zero with the rate of 
$O(\sqrt{\frac{B}{N}})$.

\subsection{AUC Bound Proof}
Here we show that the worst case AUC loss using histogram binning calibration
method would be at the rate of $O(\frac{1}{B})$. For
proving the theorem, let's first recall the concentration results for
$\hat{\eta}_i$ and $\hat{\theta}_i$.
Using Hoeffding's inequality we have the following:

\begin{align}
&\PP\{ |\hat{\theta}_i - \theta| \geq \epsilon \} \leq 2
e^{\frac{-2N\epsilon^2}{B}} \label{eq:theta-concenteration}\\
&\PP\{ |\hat{\eta}_i - \eta| \geq \epsilon \} \leq 2
e^{-2N\epsilon^2} 
\label{eq:eta-concenteration}
\end{align}

The above concentration inequalities show that with probability $1-\delta$
we have the following inequalities:

\begin{align}
&|\hat{\theta}_i-\theta_i| \leq \sqrt{\frac{B}{2N}\log (\frac{2}{\delta})}\\
&|\hat{\eta}_i-\eta_i| \leq \sqrt{\frac{1}{2N}\log (\frac{2}{\delta})}
\end{align}
The above results show that for the large amount of data with high probability,
$\hat{\eta_i}$ is concentrated around $\eta_i$ and $\hat{\theta}_i$ is
concentrated around around $\theta_i$.

Based on \cite{agarwal2006generalization} the empirical
AUC of a classifier $\phi(.)$ is defined as follow:

\begin{equation}
\hat{AUC}  = \frac{1}{m n} \sum_{i: z_i = 1} \sum_{j: z_j = 0} I(y_i > y_j) +
\frac{1}{2} I(y_i = y_j)
\label{eq:auc_emp} 
\end{equation}

Where $m$ and $n$ as mentioned in section [2] (assumptions and notations) in
main script are respectively the total number of positive and negative examples.
Computing the expectation of the equation \ref{eq:auc_emp} gives the actual AUC
as following:

\begin{equation}
AUC = Pr\{ y_i > y_j | z_i = 1, z_j = 0\} + \frac{1}{2} Pr\{y_i = y_j | z_i = 1, z_j = 0\}
\end{equation} 

It would be nice to mention that using the MacDiarmid concentration inequality
it is also possible to show that the empirical $\hat{AUC}$ is highly
concentrated around true $AUC$ \cite{agarwal2006generalization}.

 
Recall $p_{in}$ is the space of output of base classifier ($\phi$). 
Also, $p_{out}$ is the space of output of transformed probability estimate using
histogram binning.
Assume $B_1,\ldots, B_B$ are the non-overlapping bins defined on 
the $p_{in}$ in the histogram binning approach. Also, assume $y_i$ and $y_j$ are
the base classifier outputs for two different instance where $z_i = 1$ and $z_j
= 0$. In addition, assume $\hat{y}_i$ and $\hat{y}_j$ are respectively, the
transformed probability estimates for $y_i$ and $y_j$ using histogram binning
method.

Now using the above assumptions we can write the AUC loss of
using histogram binning method as following:

\begin{align}
AUC\_Loss &= AUC(y) - AUC(\hat{y})\\ 
&=\PP\{y_i>y_j | z_i = 1, z_j = 0 \} + \frac{1}{2} \PP\{y_i = y_j | z_i = 1, z_j
= 0 \} \notag\\
& - (\PP\{\hat{y}_i > \hat{y}_j | z_i = 1, z_j = 0 \} + \frac{1}{2}
\PP\{\hat{y}_i = \hat{y}_j | z_i = 1, z_j = 0 \})
\end{align}

By partitioning the space of uncalibrated estimates $p_{in}$ one can write the $AUC\_Loss$ as following:

\begin{align}
AUC\_Loss &= \sum_{K,L} (\PP\{y_i > y_j, y_i \in B_K, y_j \in B_L | z_i = 1,
z_j = 0 \} -\PP\{\hat{y}_i > \hat{y}_j, y_i \in B_K, y_j \in B_L | z_i = 1, z_j = 0 \}) \notag\\
		   &+ \sum_{K} (\PP\{y_i > y_j, y_i \in B_K, y_j \in B_K | z_i = 1, z_j = 0 \}
		   				+ \frac{1}{2} \PP\{y_i = y_j, y_i \in B_K, y_j \in B_K | z_i = 1, z_j=0 \}\notag \\ 
		   				&-\frac{1}{2} \PP\{\hat{y}_i = \hat{y}_j, y_i \in B_K,y_j \in B_K | z_i
		   				= 1, z_j = 0 \}) \notag\\
\label{eq:Loss_AUC1}					   
\end{align}

Where we make the following reasonable assumption that simplifies our
calculations: 
\begin{itemize}
  \item Assumption $1$ : $\hat{\theta}_i \neq \hat{\theta_j}$ if $i \neq j$ 
\end{itemize}


Now we will show that the first summation part in equation \ref{eq:Loss_AUC1} 
will be less than or equal to zero. Also, the second summation part 
will go to zero with the convergence rate of $O(\frac{1}{B})$.

\subsubsection*{First Summation Part}
Recall that in the histogram binning method the calibration estimate $\hat{y} = \hat{\theta}_K$ if $y \in B_K$.
Also, notice that if $y_i \in B_K$, $y_j \in B_L$ and $K > L$ then we have $y_i > y_j$ for sure.  
So, using the above facts we can rewrite the first summation part in equation \ref{eq:Loss_AUC1} as following:
\begin{align}
Loss_1 &= \sum_{K>L} \PP\{y_i \in B_K, y_j \in B_L | z_i = 1, z_j = 0 \}
		- \sum_{K,L} \PP\{\hat{\theta}_K > \hat{\theta}_L, y_i \in B_K, y_j \in B_L |
		z_i = 1, z_j = 0 \} \notag\\
\label{eq:Loss_AUC2}					   
\end{align} 

We can rewrite the above equation as following:

\begin{align}
Loss_1 = \sum_{K>L} &(\PP\{y_i \in B_K, y_j \in B_L | z_i = 1, z_j =0\}\notag\\
		&- \PP\{\hat{\theta}_K > \hat{\theta}_L, y_i \in B_K, y_j \in B_L |
		z_i = 1, z_j = 0 \} \notag\\
		&- \PP\{\hat{\theta}_L > \hat{\theta}_K, y_i \in B_L, y_j \in B_K |
		z_i = 1, z_j = 0 \}) \notag\\
\label{eq:Loss_AUC3}					   
\end{align}

Next by using the Bayes' rule and omitting the common denominators among the
terms we have the following:
\begin{align}
Loss_1 \propto \sum_{K>L} &\bigg( \PP\{z_i = 1, z_j=0|y_i \in B_K, y_j \in
							 		B_L\} \notag\\
						  &- \PP\{\hat{\theta}_K > \hat{\theta}_L, z_i = 1, z_j = 0| y_i  \in B_K,
								y_j \in B_L \} \notag\\
						  &- \PP\{\hat{\theta}_L > \hat{\theta}_K, z_i = 1, z_j = 0| y_i  \in B_L,
								y_j \in B_K \}\bigg) \times \PP\{y_i \in B_L, y_j \in B_K\}\notag\\
\label{eq:Loss_AUC1_1}					   
\end{align}

We next show that the term inside the parentheses in equation
\ref{eq:Loss_AUC1_1} is less or equal to zero by using the i.i.d. assumption and
the notations we mentioned in Section 2, as following:

\begin{align}
 Inside\_ Term(IT) &= (\theta_K (1-\theta_L) \notag\\
				&- \mathbb{I}\{\hat{\theta}_K > \hat{\theta}_L \}\theta_K(1-\theta_L)\notag\\
				&- \mathbb{I}\{\hat{\theta}_L >\hat{\theta}_K\}\theta_L(1-\theta_K))\notag\\
\label{eq:Loss_AUC1_2}					   
\end{align}

Now if we have the case $\hat{\theta}_K > \hat{\theta}_L$ then $IT$ term would
be exactly zero. if we have the case that $\hat{\theta}_L > \hat{\theta}_K$ then
the inside term would be equal to:
\begin{align}
 IT &= \theta_K (1-\theta_L) -\theta_L(1-\theta_K)\notag\\
                 &\simeq \hat{\theta}_K (1-\hat{\theta}_L)
                 -\hat{\theta}_L(1-\hat{\theta}_K)\notag\\
                 &\leq 0
\label{eq:Loss_AUC1_2}					   
\end{align}
where the last inequality is true with high probability which comes from the
concentration results for $\hat{\theta}_i$ and $\theta_i$ in equation
\ref{eq:theta-concenteration}.

\subsubsection*{Second Summation Part}
Using the fact that in the second summation part 
$\hat{y_i} = \hat{\theta}_K$ and $\hat{y_j} = \hat{\theta}_K$, we can rewrite
the second summation part as:

\begin{align}
Loss_2 &= \sum_{K} ( 
(\PP\{y_i > y_j, y_i \in B_K, y_j \in B_K | z_i = 1,z_j=0\} 
+ \frac{1}{2} \PP\{y_i = y_j, y_i \in B_K, y_j \in B_K | z_i = 1, z_j=0\})\notag \\
&- (\frac{1}{2} \PP\{y_i \in B_K, y_j \in B_K | z_i = 1, z_j = 0\}))\notag\\
	   &\leq \sum_{K} (\PP\{y_i \in B_K, y_j \in B_K | z_i = 1, z_j = 0 \}
					-\frac{1}{2} \PP\{y_i \in B_K, y_j \in B_K | z_i = 1, z_j = 0 \}) \notag\\
	   &= \frac{1}{2} \sum_{K} \PP\{y_i \in B_K, y_j \in B_K | z_i = 1, z_j = 0 \}) \notag\\
	   \label{eq:Loss2_1}			 			
\end{align}
 
Using the Bayes rule and iid assumption of data we can rewrite the equation
\ref{eq:Loss2_1} as following:

\begin{align}
Loss_2 &\leq \frac{1}{2}  \frac{\sum_{K} \PP\{ z_i = 1, z_j = 0 | y_i \in B_K, y_j \in B_K \}
						           \times \PP\{ y_i \in B_K, y_j \in B_K \}}
						   {\PP\{ z_i = 1, z_j = 0 \}}\notag \\
	   &= \frac{1}{2}  \frac{\sum_{K} \PP\{ z_i = 1, z_j = 0 | y_i \in B_K, y_j \in B_K \}
						           \times \PP\{ y_i \in B_K \} \PP\{ y_j \in B_K \}}
						   {\sum_{K,L} \PP\{ z_i = 1, z_j = 0 | y_i \in B_K, y_j \in B_L \}
						           \times \PP\{ y_i \in B_K \} \PP\{ y_j \in B_L \}} \notag\\
       &= \frac{1}{2}  \frac{\sum_{K} \PP\{ z_i = 1, z_j = 0 | y_i \in B_K, y_j \in B_K \}
						           \times \eta_K^2}
						   {\sum_{K,L} \PP\{ z_i = 1, z_j = 0 | y_i \in B_K, y_j \in B_L \}
						           \times \eta_K \eta_L}\notag \\
	   &= \frac{1}{2}  \frac{\sum_{K} \PP\{ z_i = 1, z_j = 0 | y_i \in B_K, y_j \in B_K \}}
						   {\sum_{K,L} \PP\{ z_i = 1, z_j = 0 | y_i \in B_K, y_j \in B_L \}}				           						            
	   \label{eq:Loss2_2}			 			
\end{align}
 
Where the last equality comes from the fact that $\eta_K$ and $\eta_L$ are
concentrated around their empirical estimates $\hat{\eta}_K$ and $\hat{\eta}_L$
which are equal to $\frac{1}{B}$ by construction (we build our histogram model
based on equal frequency bins).

Using the i.i.d. assumptions about the calibration samples, we can rewrite the
equation \ref{eq:Loss2_2} as following:
\begin{align}
Loss_2 &\leq \frac{\sum_{K} \PP\{ z_i = 1 | y_i \in B_K \}
							\PP\{ z_j = 0 | y_j \in B_K \}}
			      {2 \sum_{K} \PP\{ z_i = 1 | y_i \in B_K \} \times
				     \sum_{L} \PP\{ z_j = 0 | y_j \in B_L \}}\notag \\
	   &= \frac{\sum_{k=1}^B \theta_k (1-\theta_k)}
			      {2 \sum_{k=1}^B \theta_k \times
				     \sum_{l=1}^B (1-\theta_l)}\notag \\
	   &\leq \frac{1}{2B} 
	   \label{eq:Loss_AUC_Last}		     
\end{align}

Where the last inequality comes from the fact that the order of
$\{(1-\theta_1),\ldots, (1-\theta_B)\}$'s is completely reverse in comparison
to the order of $\{\theta_1,\ldots, \theta_B\}$ and applying Chebychev's Sum inequality.

\begin{theorem}
\emph{(Chebyshev's sum inequality)}
if $a_1 \leq a_2 \leq \ldots \leq a_n$ and $b_1 \geq b_2 \geq \ldots \geq b_n$
then   \\
$\frac{1}{n}\sum_{k=1}^{n}a_k b_k \leq (\frac{1}{n}\sum_{k=1}^n a_k
)(\frac{1}{n}\sum_{k=1}^n b_k)$
\end{theorem}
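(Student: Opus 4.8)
The plan is to prove the inequality by establishing the algebraic identity that writes the difference between the two sides as a double sum of pairwise products, and then arguing that every term in that double sum is nonpositive under the oppositely-ordered hypothesis. This is the cleanest route and is entirely self-contained.

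First I would record the Chebyshev--Lagrange identity
\[
n\sum_{k=1}^n a_k b_k - \left(\sum_{k=1}^n a_k\right)\left(\sum_{k=1}^n b_k\right) = \frac{1}{2}\sum_{j=1}^n\sum_{k=1}^n (a_j - a_k)(b_j - b_k).
\]
This follows by expanding $(a_j-a_k)(b_j-b_k)$ into its four terms and summing each over $j$ and $k$ separately: the terms $\sum_{j,k} a_j b_j$ and $\sum_{j,k} a_k b_k$ each contribute $n\sum_k a_k b_k$, while the cross terms $\sum_{j,k} a_j b_k$ and $\sum_{j,k} a_k b_j$ each contribute $\left(\sum_k a_k\right)\left(\sum_k b_k\right)$; collecting these with the factor $1/2$ reproduces the left-hand side.

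Next I would analyze the sign of each summand on the right. Fix indices $j$ and $k$. Because $\{a_k\}$ is nondecreasing while $\{b_k\}$ is nonincreasing, whenever $a_j \geq a_k$ we also have $b_j \leq b_k$, so the factors $(a_j - a_k)$ and $(b_j - b_k)$ carry opposite signs and their product is $\leq 0$; the symmetric case $a_j \leq a_k$ yields the same conclusion, and the diagonal terms $j=k$ vanish. Hence every term of the double sum is nonpositive, so the entire right-hand side is $\leq 0$.

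Combining the two steps gives $n\sum_k a_k b_k \leq \left(\sum_k a_k\right)\left(\sum_k b_k\right)$, and dividing through by $n^2$ yields the claimed form $\frac{1}{n}\sum_k a_k b_k \leq \left(\frac{1}{n}\sum_k a_k\right)\left(\frac{1}{n}\sum_k b_k\right)$. There is no genuine obstacle, as this is a classical inequality; the only point requiring care is the bookkeeping in verifying the identity and confirming that the oppositely-ordered hypothesis is exactly what forces each pairwise product to be nonpositive (an equally-ordered hypothesis would make each product nonnegative and reverse the inequality, which is the variant actually invoked in the $Loss_2$ bound above). An alternative would be the rearrangement inequality or a short induction on $n$, but the identity-based argument is the most transparent.
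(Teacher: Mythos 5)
Your proof is correct and self-contained. There is, however, nothing in the paper to compare it against: the paper states Chebyshev's sum inequality as a known classical fact and never proves it, invoking it only to justify the last step of the $Loss_2$ bound (the inequality $\sum_{k=1}^B \theta_k(1-\theta_k) \leq \frac{1}{B}\bigl(\sum_k \theta_k\bigr)\bigl(\sum_l (1-\theta_l)\bigr)$). So your identity-based argument supplies a justification the paper leaves implicit. The expansion of $\frac{1}{2}\sum_{j,k}(a_j-a_k)(b_j-b_k)$ is computed correctly, and the sign analysis goes through; one small imprecision is the claim that $a_j \geq a_k$ implies $b_j \leq b_k$, which can fail when $a_j = a_k$ but $j < k$ --- in that case the first factor vanishes, so the product is still nonpositive and the conclusion is unaffected, but the cleaner statement is to fix $j \geq k$ and observe $(a_j - a_k) \geq 0$, $(b_j - b_k) \leq 0$.

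One substantive correction to your closing parenthetical: the variant actually invoked in the paper's $Loss_2$ bound is precisely the oppositely-ordered form you proved, not the equally-ordered (reversed) one. There the roles are $a_k = \theta_k$ and $b_k = 1 - \theta_k$; sorting so that the $\theta_k$ are nondecreasing makes the $1-\theta_k$ nonincreasing, and the stated inequality then gives $\sum_k \theta_k(1-\theta_k) \leq \frac{1}{B}\bigl(\sum_k \theta_k\bigr)\bigl(\sum_l (1-\theta_l)\bigr)$, which is exactly what produces the final bound $Loss_2 \leq \frac{1}{2B}$.
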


Now the facts we proved above about $Loss_1$ and $Loss_2$ in equations
\ref{eq:Loss_AUC_Last} and \ref{eq:Loss_AUC1_2} shows that the worst case
$AUC\_Loss$ is upper bounded by $O(\frac{1}{B})$ Using histogram binning
calibration method.

\begin{remark}

It should be noticed, the above proof shows that the worst case AUC loss at the
presence of large number of training data point is bounded by $O(\frac{1}{B})$.
However, it is possible that we even gain AUC power by using histogram binning
calibration method as we did in the case we applied calibration over the linear
SVM model in our simulated dataset.
\end{remark}

\bibliographystyle{amsplain}

\end{document}